\crefname{section}{Sec.}{Secs.}
\Crefname{section}{Section}{Sections}
\Crefname{table}{Table}{Tables}
\crefname{table}{Tab.}{Tabs.}
\newtheorem{theorem}{Theorem}
\newtheorem{proof}{Proof}
\title{Decomposing the Neurons: \\Activation Sparsity via Mixture of Experts for Continual Test Time Adaptation}
\author{%
Rongyu Zhang \\
Nanjing University \\
  \And
  Aosong Cheng$^\ast$ \\
  Peking University \\
    \And
  Yulin Luo\thanks{Equal contribution} \\
  Peking University \\
      \And
  Gaole Dai \\
    Peking University \\
      \And
      Huanrui Yang \\
UC, Berkeley \\
      \And
  Jiaming Liu \\
  Peking University \\
      \And
  Ran Xu \\
  Peking University \\
        \AND
  Li Du \\
  Nanjing University \\
        \And
  Yuan Du\thanks{Corresponding authors} \\
  Nanjing University \\
          \And
  Yanbing Jiang \\
  Peking University\thanks{School of Software and Microelectronics, Peking University} \\
          \And
  Shanghang Zhang$^\dagger$  \\
  Peking University\thanks{National Key Laboratory for Multimedia Information Processing in School of Computer Science at Peking
University (Same for all author from Peking University if not mentioned.)} \\
}
\begin{document}

\maketitle

\begin{abstract}
Continual Test-Time Adaptation (CTTA), which aims to adapt the pre-trained
model to ever-evolving target domains, emerges as an important task for vision models. As current vision models appear to be heavily biased towards texture, continuously adapting the model from one domain distribution to another can result in serious catastrophic forgetting. Drawing inspiration from the human visual system's adeptness at processing both shape and texture according to the famous Trichromatic Theory, we explore the integration of a Mixture-of-Activation-Sparsity-Experts (MoASE) as an adapter for the CTTA task. Given the distinct reaction of neurons with low/high activation to domain-specific/agnostic features, MoASE decomposes the neural activation into high-activation and low-activation components with a non-differentiable Spatial Differentiate Dropout (SDD).
Based on the decomposition, we devise a multi-gate structure comprising a Domain-Aware Gate (DAG) that utilizes domain information to adaptive combine experts that process the post-SDD sparse activations of different strengths, and the Activation Sparsity Gate (ASG) that adaptively assigned feature selection threshold of the SDD for different experts for more precise feature decomposition. Finally, we introduce a Homeostatic-Proximal (HP) loss to bypass the error accumulation problem when continuously adapting the model. Extensive experiments on four prominent benchmarks substantiate that our methodology achieves state-of-the-art performance in both classification and segmentation CTTA tasks. Our code is now available at \url{https://github.com/RoyZry98/MoASE-Pytorch}.
\end{abstract}

\section{Introduction}
With the emergence of deep-learning-based methods in autonomous driving\cite{yurtsever2020survey,liu2022multi,chi2023bev,teng2023motion,yang2023bevformer} and robotics\cite{chen2023prime,li2023manipllm,li2023imagemanip,hu2023planning,gao2023textdeformer}, the continuously changing test-time scenarios in these applications raise significant challenges for stationary machine perception systems\cite{zhang2024efficient,han2020occuseg,tian2024occ3d,xie2021segformer} which anticipates that the test-time data distribution always mirrors that of the training data, resulting in severe error accumulation and catastrophic forgetting. To address this issue, Continual Test-Time Adaptation (CTTA) has been proposed \cite{wang2022continual,song2023ecotta}, which moves beyond the conventional setting of Test-Time Adaptation (TTA) that handles a single shift \cite{wang2020tent,liang2023comprehensive,brahma2023probabilistic} to manage a sequence of distribution shifts over time. Current CTTA approaches \cite{liu2023vida,yang2023exploring,gan2022decorate,liu2023adaptive} predominantly utilize a teacher-student framework to concurrently extract domain knowledge in target domains by generating pseudo-labels. However, such methods frequently struggle to identify and differentiate domain-specific and domain-agnostic features with \texttt{implicit}\cite{yang2023exploring,liu2023vida} self-training visual prompt and high/low-rank adapter, which lacks interpretability and reduced controllability over the training process. Moreover, in contrast to implicit models, vision science reveals that the human visual system employs a clearly defined, explicit mechanism with an absolute threshold \cite{barlow1956retinal,koenig2011absolute} to process visual signals. Therefore, we aim to explore the solution for CTTA tasks from an \texttt{explicit} perspective to decompose the feature representations for better perception.

\begin{figure*}[t]
\centering
\includegraphics[width=0.99\linewidth]{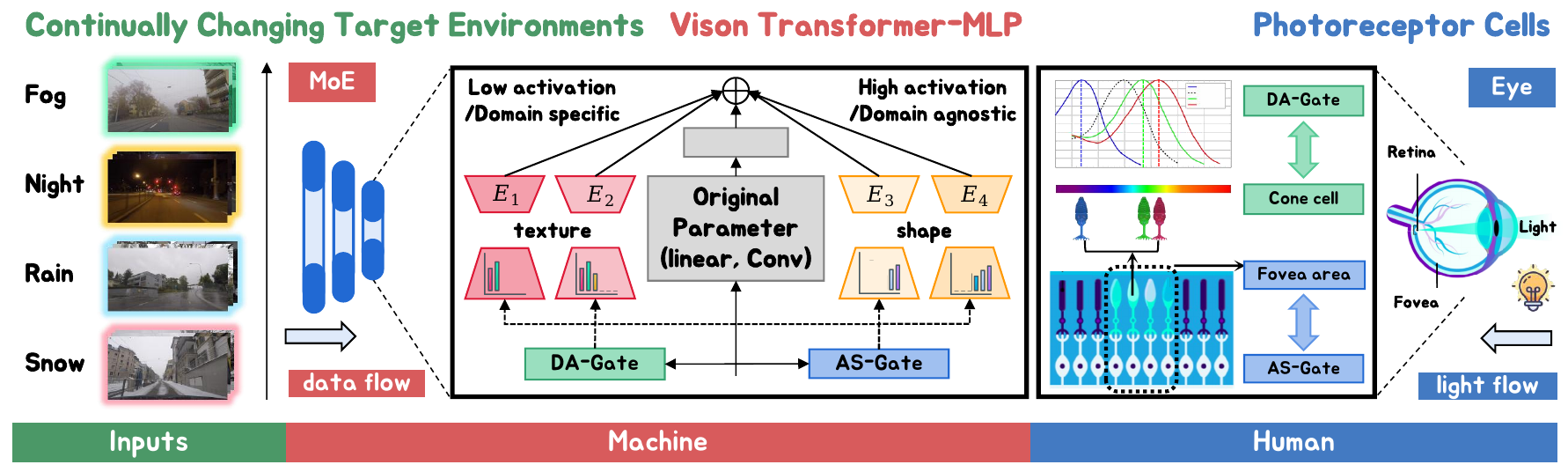}
\caption{\textbf{The problem and motivation.} Our goal is to effectively adapt the source pre-trained model to continually changing target domains. We propose a Mixture-of-Experts (MoE) based to encode the different features of texture and shape with different experts. Our design is inspired by the photoreceptor cells in the human visual system, where the three types of cone cells are sensitive to different wavelengths of light and the Fovea.}
\label{fig: teaser}
\vspace{-0.5cm}
\end{figure*}

The renowned Trichromatic Theory\cite{von1867handbuch} demonstrates that the Retina in the human eye consists of different types of cone cells\cite{mustafi2009structure}, each sensitive to specific light wavelengths. This photoreceptor
cells are densely packed in the Fovea\cite{bringmann2018primate}, an area highly sensitive to detail and essential for high-resolution vision tasks to enable self-attention, while other cone cells work with additional rod cells to manage perception in less focused areas of the visual field as well as the dimly lit environment, as shown in the blue part of the \cref{fig: teaser}. Similarly, deep neural networks (DNN) also exhibit parallel characteristics which have been proven by previous works\cite{li2024emergence,zhang2024multi,yang2020fda,xu2021fourier} that strongly activated neuron encodes shape and structure, which are domain agnostic, while weak activation corresponds to domain-specific texture and style. Therefore, we can delineate the conceptual associations between \textbf{\{low/high activation\}-\{domain specific/agnostic\}-\{texture/shape\}.} Drawing inspiration from the human visual system and the prior research, we propose an intriguing hypothesis: $\spadesuit$ \textbf{Can we explicitly decompose neurons by activation degree to encode shapes and textures separately for better model perception to differentiate the continuously changing environments?}

To demystify the role of activation sparsity in CTTA, we manually decompose strongly and weakly activated neurons in a pretrained and visualize their responses to inputs from varying domains in \cref{fig:cam}. We observe a clear distinction in the neuron attention, where \textbf{strongly activated neurons} focus on \textbf{domain-agnostic foreground features} relating to the main object; while \textbf{weakly activated neurons} attend to \textbf{domain-specific background features} of styles and noises. 
This motivates our study on the \texttt{explicit} decomposition of neural activation with a sparse Mixture-of-Experts (MoE)\cite{jacobs1991adaptive,masoudnia2014mixture,riquelme2021scaling,liu2024intuition,zhou2022mixture} architecture to mimic the partitioned processing of visual signals by the Fovea and peripheral regions in the Retina\cite{field2005retinal}. We propose the Mixture-of-Activation-Sparsity-Experts (MoASE), an adapter integrated into pre-trained models, featuring a non-differentiable Spatial Differentiate Dropout (SDD) mechanism. This setup enhances the extraction of domain-agnostic object shapes and structures and identifies domain-specific textures from a spatial-wise perspective, such as weather-related noise from fog and snow\cite{zhang2024efficient}  through specialized expert modules, whose functionality is highlighted in the red section of \cref{fig: teaser}.


Moreover, we developed a multi-gate module to enhance dynamic perception in the fluctuating CTTA scenario\cite{wang2022continual,song2023ecotta,liu2023vida,yang2023exploring,gan2022decorate,liu2023adaptive}, comprising the Domain Aware Gate (DAG) to inform the experts with domain-specific information and the Activation Sparsity Gate (ASG) to adjust the threshold of activation selection for each expert. Moreover, to mitigate error accumulation caused by the random initialization of the injected MoASE, we have devised a Homeostatic-Proximal (HP) loss to regularize and balance the updates of domain-specific and domain-agnostic parameters.

Extensive experiments demonstrate the superiority of our proposed Mixture-of-Activation-Sparsity-Experts (MoASE) across three image classification benchmarks\cite{krizhevsky2009learning,hendrycks2019benchmarking} and one segmentation benchmark\cite{cordts2016cityscapes,sakaridis2021acdc} on CTTA scenarios with improvements exceeding 15.3\% in classification accuracy and 5.5\% in segmentation mIoU.
The major contribution of our paper can be summarized as follows:
    \begin{itemize}
        \item We draw inspiration from the human visual system to develop a Mixture-of-Sparsity-Activation-Experts (MoASE) model, which addresses the issues of error accumulation and catastrophic forgetting to face the continuously changing distribution.
        \item We decompose the neuron-encoded activation into domain-specific and domain-agnostic features, using distinct expert models to encode texture and shape independently with Spatial Differentiate Dropout (SDD).
        \item We developed a multi-gate module featuring the Domain Aware Gate (DAG) and Activation Sparsity Gate (ASG), utilizing domain information to dynamically generate adaptive routing strategies and activation thresholds for experts.
        \item We propose a novel Homeostatic-Proximal (HP) loss to mitigate the accumulation error and enhance performance within the teacher-student framework. Extensive experiments demonstrate the efficacy of our MoASE across diverse CTTA scenarios.
    \end{itemize}

\section{Related works}
\textbf{Continual Test-Time Adaptation (CTTA)} addresses the challenge of adapting to a non-static target domain, which complicates traditional TTA methods. The pioneering work by Wang et al. \cite{wang2022continual} combined bi-average pseudo labels with stochastic weight resets to tackle this issue. To mitigate error accumulation, Ecotta \cite{song2023ecotta} employs a meta-network for output regularization. While these approaches focus on model-level solutions, other studies \cite{gan2023decorate,yang2023exploring, ni2023distribution,liu2023vida} explore the use of visual domain prompts or minimal parameter adjustments for continual learning. Liu \cite{liu2023adaptive} introduced reconstruction techniques for continual adaptation, and BECotta \cite{lee2024becotta} implemented a Mixture of Experts strategy in CTTA, promoting effective domain-specific knowledge retention. Unlike previous \texttt{implicit} methods, our MoASE adopts an \texttt{explicit} approach to this challenge.

\textbf{Activation Sparsity} refers to the presence of numerous weakly-contributing elements in activation outputs\cite{Chen_2023_CVPR,kurtz2020inducing,yang2019dasnet,yang2019sparse}. SparseViT\cite{song2024prosparse} revisits this concept for modern window-based vision transformers, aiming to increase speed and reduce computation. Grimaldi\cite{Grimaldi_2023_ICCV} introduces semi-structured activation sparsity that can be leveraged with minor runtime adjustments to significantly enhance speed on embedded devices. Meanwhile, Mirzadeh\cite{mirzadeh2023relu} explores the reuse of activated neurons in LLMs, proposing strategies to reduce computation. However, all the previous works aim to improve model efficiency until \cite{zhang2024multi,li2024emergence} reveal the contribution of shape bias for model performance.

\textbf{Mixture-of-Experts (MoE)} is initially introduced by Jacobs and Jordan \cite{jacobs1991adaptive,jordan1994hierarchical}, uses independent modules to boost expressiveness and cut computational costs. Eigen and Ma \cite{eigen2013learning,ma2018modeling} evolved it into the MoE layer. In natural language processing, GShard \cite{gshard} and Switch Transformer \cite{switch} incorporated MoE with top-1/2 routing to enhance capacity. Fixed routing \cite{hash,stablemoe} and ST-MoE \cite{st_moe} aimed to stabilize training. Recent developments \cite{zhu2023sira,liu2024intuition} introduced efficient adaptors within MoE and \cite{zhao2023moec} combined MoE with implicit neural network for image compression. In computer vision, M$^{3}$ViT by Liang et al. \cite{liang2022m} selectively engages experts, while Zhang et al. \cite{zhang2024efficient} merged feature modulation with MoE for better image restoration. Our MoASE leverages a multi-expert setup to manage diverse neuronal activation.

\begin{figure*}[t]
\centering
\includegraphics[width=0.99\linewidth]{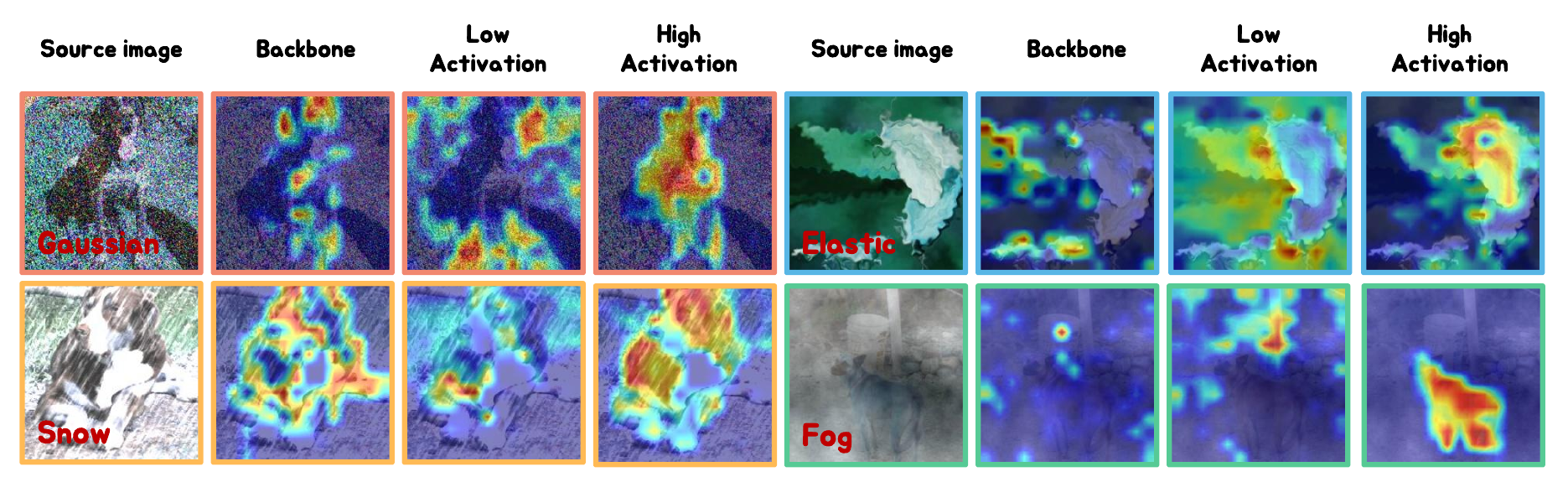}
\caption{\textbf{The visualization analysis of the Class Activation Map (CAM).} We adopt CAM to compare the attention of the low-activation MoASE, high-activation MoASE, and the original model during the continual adaptation process.}
\label{fig:cam}
\end{figure*}

\section{Motivation}
\label{sec:motivation}
Drawing inspiration from the complexities of the human visual system \cite{von1867handbuch,mustafi2009structure,bringmann2018primate} and reinforced by seminal findings in recent research \cite{li2024emergence,zhang2024multi,yang2020fda,xu2021fourier}, we come up with the hypothesis of leveraging activation sparsity via the innovative integration of a parallel MoE for CTTA. This MoE employs a network of architecture-consistent experts, combined with the Spatial Differentiate Dropout technique, to effectively decompose activation neurons. Such a configuration not only enhances the model's capability to perceive domain-agnostic object structures but also sharpens its accuracy in identifying domain-specific knowledge within dynamically changing environments.

To provide empirical support for our hypothesis, we extended our analysis to include a qualitative evaluation using Class Activation Mapping (CAM)\cite{zhou2016learning} within the ImageNet-to-ImageNet-C CTTA scenario. As depicted in \cref{fig:cam}, our study explores feature representations in various target domains, each characterized by different activation intensities, including Gaussian and Elastic noise, as well as Snow and Fog weather conditions. Specifically, we selectively retained only the high or low activation values within the ViT-base model under the CTTA scenario\cite{wang2022continual}. Our findings reveal that models maintaining only weakly activated neurons primarily accentuate fluctuations in background noise while overlooking the foreground object. This implies that weakly activated neurons are adept at capturing domain-relevant information. Conversely, models with strongly activated neurons exhibit a contrasting pattern, focusing more intensively on object shapes and structures. This observation supports the notion that strongly activated neurons are sensitive to domain-agnostic object structures, thus validating previous visual science principles and our hypothesis.

\section{Methods}
\label{sec:method}

\subsection{Preliminary}
\textbf{Continual Test-Time Adaptation.} We pre-train the model $\theta(y_{s}|x_{s})$ on source domain $D_{S}={(\mathcal{Y}_S, \mathcal{X}_S)}$ and adapt it to multiple target domains $D_{T_i}={\{\mathcal{X}_{\mathcal{T}_i}\}}_{i=1}^{n}$, where $n$ indicates the number of target datasets. Utilizing the robustness of mean teacher predictions \cite{tarvainen2017mean, dobler2023robust}, we implement a teacher-student framework ($\theta^{\mathcal{T}}$ and $\theta^{\mathcal{S}}$) to maintain stability during adaptation\cite{wang2022continual, gan2022decorate}. This adaptation process, which is unsupervised and single-pass for target domain data $x\in \mathcal{X}_{T}$, does not require access to source domain data. We aim to adapt the pre-trained model to continuously evolving target domains while maintaining recognition capabilities on familiar distributions. The framework and methodology are outlined in \cref{fig:framework}.

\textbf{Mixture-of-Experts.}
The MoE model is fundamentally composed of $E$ expert functions $e_{i}: \mathcal{X}_{\mathcal{T}} \rightarrow \mathbb{R}^{p}$ for $i\in E$, alongside a trainable gating mechanism $g: \mathcal{X}_{\mathcal{T}} \rightarrow \mathbb{R}^{q}$ which allocates inputs to experts by outputting a probability vector. For an input sample $x$, the MoE's output is the aggregate of expert contributions, each weighted by the router's assigned probabilities, which can be mathematically represented with soft routing as:
\begin{equation}
\label{equ:routing}
    \begin{aligned}
y = \sum_{i=1}^{E}e_{i}(x)g_{i}(x), \ \ g(x)=\sigma(\boldsymbol{A}x+\boldsymbol{b}) \quad
s.t. \ \  g(x)\geq0 \ \ \text{and} \ \ \sum_{i=1}^{E}g_{i}(x)=1
    \end{aligned}
\end{equation}
where $\sigma(\cdot)$ signifies the softmax function, $\boldsymbol{A}\in\mathbb{R}^{n\times d}$ represents a matrix of trainable weights, and $\boldsymbol{b}\in \mathbb{R}^{n}$ is the bias vector. This gate operates densely, allocating nonzero probabilities to all experts. 

\begin{figure*}[t]
\centering
\includegraphics[width=0.99\linewidth]{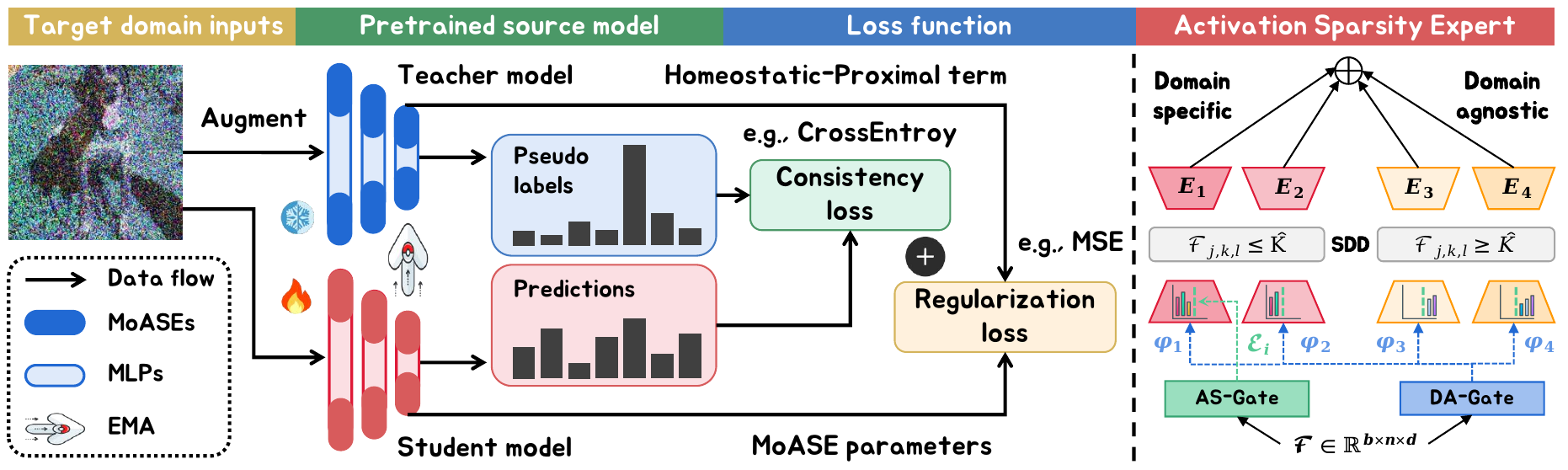}
\caption{\textbf{The overall framework of Mixture-of-Activation-Sparsity-Experts (MoASE).} (Left) We integrate the MoASE into the linear layers of a pre-trained source model with a teacher-student framework, using consistency loss and a specially formulated Homeostatic-Proximal (HP) loss as the optimization target.
(Right) Depending on the degree of distribution shift, we devised a multi-router structure that includes the Domain Aware Gate (DAG) and the Activation Sparsity Gate (ASG).} 
\label{fig:framework}
\vspace{-0.3cm}
\end{figure*}

\subsection{Mixture-of-Activation-Sparsity-Experts}

\textbf{Spatial Differentiate Dropout.} We have adopted the sparse coding principle by incorporating a spatially oriented Top-K dropout operation in our framework. The MoASE architecture follows the same as previous MoE works\cite{jacobs1991adaptive,jordan1994hierarchical} with a gate $g(\cdot)$ and multiple experts $e(\cdot)$ with two linear layers. However, unlike traditional dropout layer between the linear layers that randomly discard $p(\%)$ of neuron activation on channel dimension, our MoASE incorporates the innovative Spatial Differentiate Dropout (SDD) that selectively retains the top/bottom $q(\%)$ significant responses from the spatial-wise token dimension following the non-linear activation function (e.g., ReLU) since the domain-related information was distributed across each token. Specifically, for an input feature $\mathcal{F}\in \mathbb{R}^{b\times n\times d}$, the SDD layer $\delta(\cdot)$ receive the non-linear-transformed activated feature and generate the $\hat{\mathcal{F}}\in \mathbb{R}^{b\times n\times d}$ with the sorting function $\delta=argsort(\mathcal{F}, K, L)$:
\begin{equation}
\label{equ:topk}
\hat{\mathcal{F}}[j,k,l]:=\left\{
\begin{aligned}
  \ \mathcal{F}[j,k,l], \quad &\text{if} \ \ \mathcal{F}[j,k,l]\ \geq \ argsort(\mathcal{F}[j,:,:])[K] \ \ \text{and} \ \ L=True  \\
  \ \mathcal{F}[j,k,l], \quad  &\text{if} \ \ \mathcal{F}[j,k,l]\ \leq \ argsort(\mathcal{F}[j,:,:])[K] \ \ \text{and} \ \ L=False  \\
  \ 0, \quad &\text{otherwise}
\end{aligned}
\right.
\end{equation}
where $K=\lfloor nd\times q\rfloor$ is the index, $L$ is the boolean variable in the $argsort(\cdot)$ function indicating whether to select the largest or smallest values. For the domain-agnostic experts $L$ is set to $True$, while for the domain-specific experts, $L$ is set to $False$. To ensure that different experts can precept different levels of activation, we set the threshold according to the number of experts as hyperparameters which are detailed in \cref{ap:details}.

\textbf{Domain Aware Gate and Activation Sparsity Gate.}
To enhance dynamic perception in the fluctuating CTTA scenario \cite{wang2022continual,song2023ecotta,liu2023vida,yang2023exploring,gan2022decorate,liu2023adaptive}, we propose a multi-gate module consisting of the Domain Aware Gate (DAG) and the Activation Sparsity Gate (ASG). These gates provide domain-specific information and differentiate domain-agnostic objects for each expert. Both gates employ the architecture of $g(\cdot)$ described in \cref{equ:routing}, but serve distinct proposes. Specifically, the DAG determines the routing of input tokens across the experts. To better help the router adapt to diverse domains in the CTTA scenario, we specifically decompose low activations from the input feature $\mathcal{F} \in \mathbb{R}^{b \times n \times d}$ using the SDD layer with $q\%=\frac{1}{2}$,to let the router weight $G$ focuses more on domain-specific information. Simultaneously, the ASG uses the full input to generate a dynamic threshold $T\in \mathbb{R}^{1\times E}$ for SDD in each expert to facilitate adaptive activation decomposition in response to ongoing changes. The mathematics formulation can be described as: 
\begin{equation}
\label{equ:router}
\begin{aligned}
G = DAG(\delta(\mathcal{F}, \lfloor \frac{nd}{2}\rfloor, False)), \quad \quad T=ASG(\mathcal{F})
\end{aligned}
\end{equation}
where $G\in \mathbb{R}^{b\times n \times E}$ is the expert weight. The thresholds generated by ASG are combined with predefined thresholds in \cref{equ:topk} to compute the final $\hat{K}_{i} = \lfloor nd\times (q + \eta \cdot T_{i}) \rfloor$ for each expert, where $\eta$ is a temperature scale that ensures $\hat{K}$ does not exceed the upper limit. This integration supports a responsive and balanced adaptation to the evolving scenario demands. 

\subsection{Optimization objective}
\textbf{Homeostatic-Proximal loss.} Building on prior CTTA research ~\cite{wang2022continual}, we employ the teacher model $\theta^\mathcal{T}$ to generate pseudo labels $y_{pd}$ to minimize the task-specific training objective $\mathcal{L}_{TS}(\cdot)$, which are used to update MoASE. Despite the successful implementation of multiple experts to perceive various degrees of activation, we also strive to maintain statistical homeostasis amid continuous domain shifts by constraining student updates $\theta^{\mathcal{S}}$ to stay closely aligned with the initial (teacher) model $\theta^{\mathcal{T}}$, thus eliminating the need for manual intervention. In particular, instead of just minimizing the $\mathcal{L}_{TS}(\cdot)$, we further introduce the Homeostatic-Proximal term
to the original loss\cite{wang2022continual}
to approximately minimize the following optimization objective $\mathcal{L}_{Overall}$:
\begin{equation}
\mathcal{L}_{Overall} = \mathcal{L}_{TS}(\theta^{\mathcal{S}}) + \frac{\mu}{2}\sum_{e=1}^{E}||\theta_{e}^{\mathcal{S}}-\theta_{e}^{\mathcal{T}}||^{2}.
\label{apeq:hploss}
\end{equation}
where $\mu$ is a hyperparameter,  $\theta_{e}^{\mathcal{S}}$ and $\theta_{e}^{\mathcal{T}}$ represent the parameters of each expert in the MoASE teacher-student framework, respectively.

\textbf{Exponential Moving Average.} Moreover, as for the update of the teacher model, we initialize both models with source pre-trained parameters and use the Exponential Moving Average (EMA) approach to update the teacher model $\theta^{\mathcal{T}}$ following established practices \cite{gan2023decorate}:
\begin{equation}
 \theta^{\mathcal{T}}_{t} = \alpha \times\theta^{\mathcal{T}}_{t-1} + (1-\alpha) \times\theta^{\mathcal{S}}_{t}
\label{eq:ema}
\end{equation}
In this setup, $t$ indicates the time step, and we set the update weight $\alpha = 0.999$ \cite{AnttiTarvainenetal2017}.

\begin{figure*}[t]
\centering
\includegraphics[width=0.99\linewidth]{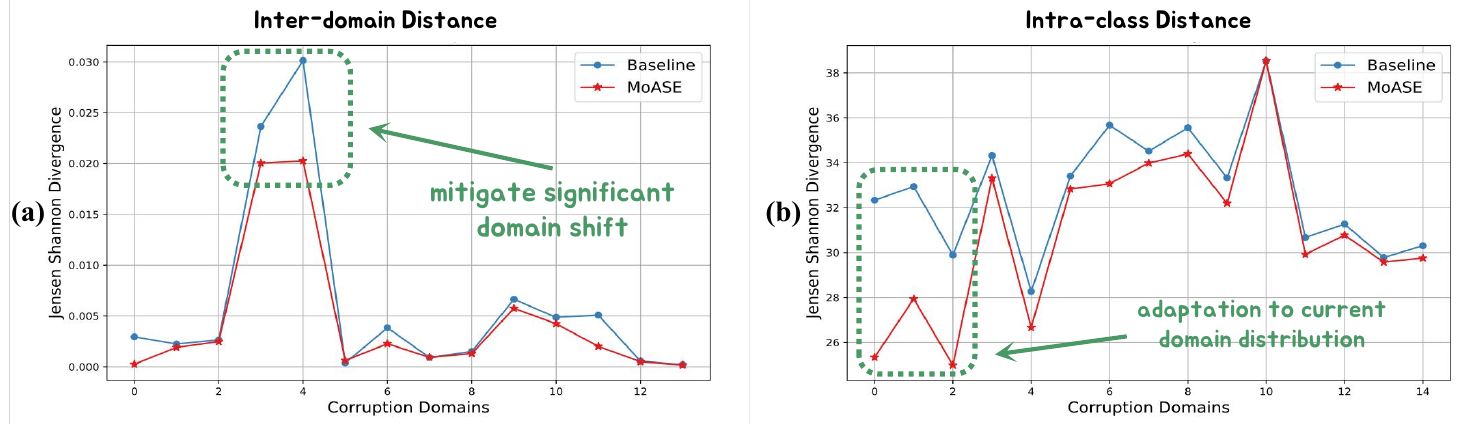}
\caption{\textbf{Inter-domain and intra-class distance.} The X-axis displays the 15 corruption domains in CIFAR10-C, listed in sequential order. (a) MoASE more effectively reduces inter-domain divergence than the source model across all 14 domain shifts. (b) MoASE significantly improves intra-class feature aggregation, producing results that closely align with those of our proposed method.}
\label{divergence}
\end{figure*}

\section{Justification}
\label{sec:just}
To substantiate our hypothesis, we measure the domain representation of MoASE by calculating the distribution distance using Ben-David's domain distance definition \cite{ben2006analysis, ben2010theory} and the $\mathcal{H}$-divergence metric, building on previous domain transfer studies \cite{ganin2016domain}. The $\mathcal{H}$-divergence between source domain $D_S$ and target domain $D_{T_i}$ is given as follows and error bound analysis can be found in \cref{ap:bound}:
\begin{equation}
d_\mathcal{H}(D_S, D_{T_i}) = 2 \mathop{\mathrm{sup}}_{\mathcal{D} \sim \mathcal{H}} | \mathop{\mathrm{Pr}}_{x \sim D_S}[\mathcal{D}(x)=1] - \mathop{\mathrm{Pr}}_{x \sim D_{T_i}}[\mathcal{D}(x)=1] |
\label{apeq:dh}
\end{equation}
where $\mathcal{H}$ is the hypothesis space and $\mathcal{D}$ the discriminator. Consistent with the methodologies presented in \cite{ruder2017learning,allaway2021adversarial}, we employ the $Jensen$-$Shannon\ (JS)\ divergence$ between two adjacent domains as an approximation of $\mathcal{H}$-divergence, owing to its demonstrated efficacy in differentiating between domains. A relatively small inter-domain divergence suggests that the feature representation is robust and exhibits reduced susceptibility to cross-domain shifts, as elaborated in \cite{ganin2016domain}:
\begin{equation}
JS(P_{D_S} || P_{D_{T_i}}) = \frac{1}{2}KL(P_{D_S}|| \frac{P_{D_S}+P_{D_{T_i}}}{2}) + \frac{1}{2}KL(P_{D_{T_i}}|| \frac{P_{D_S}+P_{D_{T_i}}}{2}),
\label{eq:js}
\end{equation}
where $P$ denotes probability distribution of model output features and $KL(\cdot||\cdot)$ denotes the $Kullback-Leibler\ (KL)\ divergence$. 
As shown in \cref{divergence}(a), the MoASE shows significantly lower inter-domain divergence, indicating robust feature representation across domains. Moreover, we evaluate domain representation based on intra-class divergence, inspired by $k$-means clustering \cite{macqueen1967classification}. The mathematical representation can be formulated as:
\begin{equation}
IC = \frac{1}{|C|} \sum_{f_i \sim C} ||f_i - \frac{1}{|C|} \sum_{f_i \sim C} f_i||_2^2
\label{apeq:kmeans}
\end{equation}
where $f_{i}$ is the encoder output feature from each domain. Smaller intra-class divergence indicates a superior understanding of the domain, as depicted for the in \cref{divergence}(b). The substantial and consistent margin between MoASE and the baseline model, as shown in \cref{divergence}, demonstrates the effectiveness of our proposed method in mitigating domain shifts.

\section{Experiments}
\label{experiment}


\noindent\textbf{CTTA Task setting.}
Following \cite{wang2022continual}, in classification CTTA tasks, we adapt the pre-trained source model sequentially across fifteen target domains on CIFAR10-C, CIFAR100C, and ImageNet-C exhibiting the highest level of corruption severity (level 5). 
In the context of segmentation CTTA, as per \cite{yang2023exploring,liu2023vida}, we employ an off-the-shelf model pre-trained on the Cityscapes dataset. For the continual adaptation to target domains, we utilize the ACDC dataset, which comprises images captured under four distinct adverse weather conditions: Fog→Night→Rain→Snow.

\noindent\textbf{Implementation Details.} 
For the backbone architectures in classification CTTA, we employ ViT-base \cite{dosovitskiy2020image}, where we standardize input image sizes to 384$\times$384 for CIFAR and 224$\times$224 pixels for ImageNet. In segmentation CTTA, the Segformer-B5 model \cite{xie2021segformer}, pre-trained, serves as our source model, with input dimensions reduced from 1920x1080 to 960x540 for processing in target domains. $\eta$ is set to 0.1. Optimization utilizes the Adam algorithm \cite{kingma2014adam} with $(\beta_1, \beta_2) = (0.9, 0.99)$. Specific learning rates are assigned to each task: 1e-4 for classification, and 2e-4 for segmentation. 

\begin{table*}[t]
\caption{\label{tab:imagenet}\textbf{Classification error rate(\%) for ImageNet-to-ImageNet-C online CTTA task.} Gain(\%) represents the percentage of improvement in model accuracy compared with the source method.}
\centering
\setlength\tabcolsep{2pt}
\begin{adjustbox}{width=1\linewidth,center=\linewidth}
\begin{tabular}{c|c|ccccccccccccccc|cc}
\toprule
Method & Venue &
 \rotatebox[origin=c]{70}{Gaussian} & \rotatebox[origin=c]{70}{shot} & \rotatebox[origin=c]{70}{impulse} & \rotatebox[origin=c]{70}{defocus} & \rotatebox[origin=c]{70}{glass} & \rotatebox[origin=c]{70}{motion} & \rotatebox[origin=c]{70}{zoom} & \rotatebox[origin=c]{70}{snow} & \rotatebox[origin=c]{70}{frost} & \rotatebox[origin=c]{70}{fog}  & \rotatebox[origin=c]{70}{brightness} & \rotatebox[origin=c]{70}{contrast} & \rotatebox[origin=c]{70}{elastic\_trans} & \rotatebox[origin=c]{70}{pixelate} & \rotatebox[origin=c]{70}{jpeg}
& Mean$\downarrow$ & Gain$\uparrow$ \\
\midrule
\multicolumn{19}{c}{CIFAR10 \quad $\Rightarrow$ \quad CIFAR10-C}\\
\midrule
Source\cite{dosovitskiy2020image} & ICLR2021 &60.1&53.2&38.3&19.9&35.5&22.6&18.6&12.1&12.7&22.8&5.3&49.7&23.6&24.7&23.1&28.2&0.0\\
TENT\cite{DequanWangetal2021}  & CVPR2021 &57.7&56.3&29.4&16.2&35.3&16.2&12.4&11.0&11.6&14.9&4.7&22.5&15.9&29.1&19.5&23.5&+4.7\\
CoTTA\cite{wang2022continual} & CVPR2022 &58.7&51.3&33.0&20.1&34.8&20&15.2&11.1&11.3&18.5&4.0&34.7&18.8&19.0&17.9&24.6&+3.6\\
VDP\cite{gan2023decorate} & AAAI2023 &57.5&49.5&31.7&21.3&35.1&19.6&15.1&10.8&10.3&18.1&4.0&27.5&18.4&22.5&19.9&24.1&+4.1\\
ViDA\cite{liu2023vida} & ICLR2024 & \textit{52.9} &\textit{47.9} &\textbf{19.4} &\textbf{11.4} &\textit{31.3} &\textbf{13.3}&\textbf{7.6} &\textbf{7.6} &\textit{9.9} &\textit{12.5} &\textit{3.8} &\textit{26.3} &\textit{14.4} &\textit{33.9} &\textit{18.2} &\textit{20.7} &+\textit{7.5} \\
\rowcolor{green!10}\textbf{Ours} & \textbf{Proposed} & \textbf{43.7}& \textbf{31.3}&\textit{25.1}&\textit{16.5}& \textbf{28.1}&\textit{13.8}&\textit{9.7}&\textit{8.3}& \textbf{7.1}& \textbf{10.1}&\textbf{3.0}&\textbf{12.9}& \textbf{12.0}& \textbf{16.3}& \textbf{13.5}& \textbf{16.8}& +\textbf{11.4}\\
\midrule
\multicolumn{19}{c}{CIFAR100 \quad $\Rightarrow$ \quad CIFAR100-C}\\
\midrule
Source\cite{dosovitskiy2020image} & ICLR2021 &55.0&51.5&26.9&24.0&60.5&29.0&21.4&21.1&25.0&35.2&11.8&34.8&43.2&56.0&35.9&35.4&0.0\\
TENT\cite{DequanWangetal2021}  & CVPR2021 &53.0&47.0&24.6&22.3&58.5&26.5&19.0&21.0&23.0&30.1&11.8&25.2&39.0&47.1&33.3&32.1&+3.3\\
CoTTA\cite{wang2022continual} & CVPR2022 &55.0&51.3&25.8&24.1&59.2&28.9&21.4&21.0&24.7&34.9&11.7&31.7&40.4&55.7&35.6&34.8&+0.6\\
VDP\cite{gan2023decorate} & AAAI2023 &54.8&51.2&25.6&24.2&59.1&28.8&21.2&20.5&23.3&33.8&\textbf{7.5}&\textbf{11.7}&32.0&51.7&35.2&32.0&+3.4\\
ViDA\cite{liu2023vida} & ICLR2024 & \textit{50.1} &\textit{40.7} &\textit{22.0} &\textbf{21.2} &\textit{45.2} &\textbf{21.6} &\textit{16.5} &\textbf{17.9} &\textbf{16.6} &\textit{25.6} &\textit{11.5} & 29.0 &\textit{29.6} &\textit{34.7} &\textbf{27.1} &\textit{27.3} &+\textit{8.1} \\
\rowcolor{green!10}\textbf{Ours} & \textbf{Proposed} & \textbf{42.6}& \textbf{34.2}&\textbf{20.5}& \textit{23.1}&\textbf{38.7}& \textit{22.2}&\textbf{17.3}& \textit{18.8}&\textit{18.0}& \textbf{24.1}&\textbf{12.7}& \textit{24.4}&\textbf{28.2}& \textbf{32.7}&\textit{29.0}& \textbf{25.8}&+\textbf{9.6}\\
\midrule
\multicolumn{19}{c}{ImageNet \quad $\Rightarrow$ \quad ImageNet-C}\\
\midrule
Source\cite{dosovitskiy2020image} & ICLR2021 &53.0&51.8&52.1&68.5&78.8&58.5&63.3&49.9&54.2&57.7&26.4&91.4&57.5&38.0&36.2&55.8&0.0\\
TENT\cite{DequanWangetal2021}  & CVPR2021 &52.2&48.9&49.2&65.8&73&54.5&58.4&44.0&47.7&50.3&23.9&72.8&55.7&34.4&33.9&51.0&+4.8\\
CoTTA\cite{wang2022continual} & CVPR2022 &52.9&51.6&51.4&68.3&78.1&57.1&62.0&48.2&52.7&55.3&25.9&90.0&56.4&36.4&35.2&54.8&+1.0\\
VDP\cite{gan2023decorate} & AAAI2023 &52.7&51.6&50.1&58.1&70.2&56.1&58.1&42.1&46.1&45.8&23.6&70.4&54.9&34.5&36.1&50.0&+5.8\\
ViDA\cite{liu2023vida} & ICLR2024 & \textit{47.7} & \textit{42.5} & \textit{42.9} & \textbf{52.2} & \textit{56.9} & \textit{45.5} & \textit{48.9} & \textit{38.9} & \textit{42.7} & \textbf{40.7} & \textbf{24.3} & \textbf{52.8}  & \textit{49.1} & \textit{33.5} & \textit{33.1} & \textit{43.4} & +\textit{12.4} \\
\rowcolor{green!10}\textbf{Ours} & \textbf{Proposed} & \cellcolor{green!10}\textbf{43.1}&\cellcolor{green!10}\textbf{38.4}&\cellcolor{green!10}\textbf{36.8}&\cellcolor{green!10}\textit{54.7}&\cellcolor{green!10}\textbf{52.2}&\cellcolor{green!10}\textbf{41.2}&\cellcolor{green!10}\textbf{48.3}&\cellcolor{green!10}\textbf{37.7}&\cellcolor{green!10}\textbf{35.6}&\cellcolor{green!10}\textit{41.1}&\cellcolor{green!10}\textit{25.2}&\cellcolor{green!10}\textit{63.5}&\cellcolor{green!10}\textbf{34.7}&\cellcolor{green!10}\textbf{27.7}&\cellcolor{green!10}\textbf{28.3}&\cellcolor{green!10}\textbf{40.5}& +\textbf{15.3}\\
\bottomrule
\end{tabular}
\end{adjustbox}
\end{table*}

\subsection{Quantitative analysis}
\textbf{The effectiveness on classification CTTA} validate the effectiveness of our method, we conduct experiments on \textbf{CIFAR10-to-CIFAR10-C}, \textbf{CIFAR100-to-CIFAR100-C}, and \textbf{ImageNet-to-ImageNet-C}, which consists of fifteen corruption types that occur sequentially during the test time, in \cref{tab:imagenet}. For MoASE, the average classification error is up to 55.8\% when we directly test the source model on target domains with ImageNet-C. Our method can outperform all previous methods, achieving a 15.3\% and 2.9\% improvement over the source model and previous SOTA method, respectively. Moreover, our method showcases remarkable performance across the majority of corruption types, highlighting its effective mitigation of error accumulation and catastrophic forgetting.

\textbf{The effectiveness on segmentation CTTA} As presented in \cref{tab:ACDC}, we observed a gradual decrease in the mIoUs of TENT and DePT over time, indicating the occurrence of catastrophic forgetting. In contrast, our method has a continual improvement of average mIoU (61.8→62.3→62.3) when the same sequence of target domains is repeated. Significantly, the proposed method surpasses the previous SOTA CTTA method \cite{wang2022continual} by achieving a 4.0\% increase in mIoU. This notable improvement showcases our method's ability to adapt continuously to dynamic target domains in segmentation.

\textbf{Adaptation across various model backbone.} 
We evaluate the flexibility of our MoASE with Segformer-B0\cite{xie2021segformer} and introduce the foundation model SAM\cite{kirillov2023segment}
as the pre-trained model and adapt them to continual target domains in the Cityscapes-to-ACDC CTTA scenario. Our method significantly enhanced performance in dynamic target domains, as shown in \cref{tab:CIFAR10-to-CIFAR10c-dino}, achieving improvements of 0.6\% and 0.4\% for Segformer-B0 and SAM-SETR, respectively. These findings confirm that the MoASE supports effective transfer learning across model sizes and is well-suited for a variety of real-world applications, including those in resource-limited environments like autonomous driving. Note that, we only use the pre-trained encoder of SAM\cite{kirillov2023segment} loaded into SETR model\cite{zheng2021rethinking} and add
a classification head, which is fine-tuned on the source domain.

\textbf{Exploration on domain generalization.} 
To evaluate the domain generalization (DG) capabilities of our method, we employed a leave-one-domain-out approach \cite{zhou2021domain, li2017deeper}, training on 10 of the 15 ImageNet-C domains and using the remaining 5 as unsupervised target domains. Our method adapts a pre-trained model to these 10 domains, then tests directly on the five unseen domains, as shown in \cref{tab:dg}. Impressively, it reduces average error in these domains by 12.4\%, outperforming other approaches such as ViDA by over 4.8\%. These results confirm our method's effectiveness in enhancing DG capabilities. More experiment results are in \cref{ap:ml,ap:dg,ap:five,ap:dimension,ap:comp,ap:qu}.

\begin{table*}[t]
\caption{\label{tab:ACDC} \textbf{Performance comparison for Cityscape-to-ACDC CTTA.} We sequentially repeat the same sequence of target domains three times. C-MAE stands for Continual-MAE. Mean is the average score of mIoU.}
\centering
\setlength\tabcolsep{1pt}
\begin{adjustbox}{width=1\linewidth,center=\linewidth}
\begin{tabular}{c|c|ccccc|ccccc|ccccc|c|c }
\toprule

\multicolumn{2}{c|}{Time}     & \multicolumn{15}{c}{$t$ \makebox[10cm]{\rightarrowfill} }                                                                              \\ \midrule
\multicolumn{2}{c|}{Round}          & \multicolumn{5}{c|}{1}    & \multicolumn{5}{c|}{2}     & \multicolumn{5}{c|}{3}  & \multirow{2}{*}{Mean$\uparrow$}   & \multirow{2}{*}{Gain$\uparrow$}  \\ \cmidrule{1-17}
Method & Venue & Fog & Night & Rain & Snow & Mean$\uparrow$ & Fog & Night & Rain & Snow  & Mean$\uparrow$ & Fog & Night & Rain & Snow & Mean$\uparrow$ & \\ \midrule
Source\cite{xie2021segformer}   &   NIPS2021 &69.1&40.3&59.7&57.8&56.7&69.1&40.3&59.7& 	57.8&56.7&69.1&40.3&59.7& 57.8&56.7&56.7&0.0\\
CoTTA\cite{wang2022continual}  & CVPR2022  &70.9&41.2&62.4&59.7&58.6&70.9&41.1&62.6& 	59.7&58.6&70.9&41.0&62.7&59.7&58.6&58.6&+1.9\\ 
VDP\cite{gan2023decorate}  & AAAI2023  &70.5&41.1&62.1&59.5&  58.3    &70.4&41.1&62.2&59.4& 58.2     & 70.4&41.0&62.2&59.4& 58.2   &  58.2 & +1.5\\
DAT\cite{ni2023distribution}   & ICRA2024
&71.7 &44.4 &65.4 &62.9 &61.1 &71.6 &45.2 &63.7 &63.3 &61.0 &70.6 &44.2 &63.0 &62.8 &60.2 &60.8 &+4.1\\
SVDP\cite{yang2023exploring}  & AAAI2024
&72.1 &44.0 &65.2 &63.0 &61.1 &72.2 &44.5 &65.9 &63.5 &61.5 &72.1 &44.2 &65.6 &63.6 &61.4 &61.1 &+4.4\\
ViDA\cite{liu2023vida} & ICLR2024 & 71.6& 43.2& 66.0& 63.4& 61.1& 
  73.2& 44.5& 67.0& \textbf{63.9}& 62.2 
 & 73.2& \textit{44.6}& 67.2& 64.2& 62.3      & 61.9 
 & +5.2\\
C-MAE\cite{liu2023adaptive} & CVPR2024 & 71.9 &\textbf{44.6} &\textbf{67.4} &63.2 &61.8 &71.7 &44.9 &66.5 &63.1 &61.6 &72.3 &\textbf{45.4} &67.1 &63.1 &62.0 &61.8 &+5.1\\
 \rowcolor{green!10}\textbf{Ours} & \textbf{Proposed} & \textbf{72.4}& \textit{44.5}& \textit{66.4}& \textbf{63.8}& \textbf{61.8}& 
  \textbf{73.0}& \textbf{45.1}& \textbf{67.5}& \textit{63.5}& \textbf{62.3} 
 & \textbf{73.5}& 44.5& \textbf{67.4}& \textbf{63.5}& \textbf{62.3}      & \textbf{62.2} 
 & +\textbf{5.5}\\
 \bottomrule
\end{tabular}
\end{adjustbox}
\vspace{-0.2cm}
\label{tab:CTTA}
\end{table*}

\begin{table}[t]
\begin{center}
    \begin{minipage}{0.48\textwidth}
        \centering
        \setlength\tabcolsep{2pt}
        \caption{\label{tab:CIFAR10-to-CIFAR10c-dino} \textbf{Different size of model backbone and foundation model}. mIoU score for Cityscape-to-ACDC. Mean(\%) is the average results in the first round.}
        \footnotesize
        \resizebox{1\columnwidth}{!}{%
        \begin{tabular}{c|c|ccccc}
        \toprule
            Backbone&Method &Fog & Night & Rain & Snow & Mean$\uparrow$\\\midrule
          \multirow{2}{*}{Segformer-B0\cite{xie2021segformer}} &ViDA\cite{liu2023vida}& 57.9 & 27.8 & 53.1 & 51.6 &  47.6\\
            
            &\cellcolor{green!10}Ours &\cellcolor{green!10}\textbf{58.2} &\cellcolor{green!10}\textbf{28.7} & \cellcolor{green!10}\textbf{53.6} &\cellcolor{green!10}\textbf{52.2} & \cellcolor{green!10}\textbf{48.2}\\
            \midrule
          \multirow{2}{*}{SAM\cite{kirillov2023segment}-SETR \cite{zheng2021rethinking}} &ViDA\cite{liu2023vida}& 76.5& 47.2 &68.1 &70.7 &65.6\\
            
            &\cellcolor{green!10}Ours &\cellcolor{green!10}\textbf{76.8}&\cellcolor{green!10}\textbf{47.6} &\cellcolor{green!10}\textbf{68.7} &\cellcolor{green!10}\textbf{71.0}  & \cellcolor{green!10}\textbf{66.0}\\
            \bottomrule
        \end{tabular}
        }
    \end{minipage}
        \hspace{0.02\textwidth}
 \begin{minipage}{0.48\textwidth}
    \centering
        \setlength\tabcolsep{3pt}
        \caption{\label{tab:dg} \textbf{The domain generalization comparisons on ImageNet-C.} Results are evaluated on ViT-base. Mean(\%) represents the performance on unseen target domains.}
        \footnotesize
        \resizebox{0.99\columnwidth}{!}{%
        \begin{tabular}{c|ccccc|c}
        \toprule
        Method & bri. & contrast & elastic & pixelate & jpeg 
        & Mean$\downarrow$ \\
        \midrule
        Source\cite{xie2021segformer}&26.4&91.4&57.5&38.0&36.2&49.9\\
       CoTTA\cite{wang2022continual} &25.3&88.1&55.7&36.4&34.6&48.0\\
       ViDA\cite{liu2023vida} & 24.6 & 68.2 & 49.8 & 34.7 & 34.1 & 42.3 \\ 
       \rowcolor{green!10} \textbf{Ours} & \textbf{25.4}& \textbf{65.5}& \textbf{37.3}& \textbf{29.5}& \textbf{29.6}& \textbf{37.5}\\ 
        \bottomrule
        \end{tabular}
        }
            \end{minipage}
\end{center}
\vspace{-0.5cm}
\end{table}

\vspace{-0.3cm}
\subsection{Ablation study}
\vspace{-0.1cm}
\textbf{Different number of experts.}
In this study, we evaluate the impact of varying expert module counts in the MoASE on mIoU scores under adverse weather conditions within the Cityscape-to-ACDC CTTA scenario, as shown in \cref{tab:experts}. We tested configurations with 2 to 16 expert modules. The baseline, with no experts, achieved a mean mIoU of 58.6\%. Our findings reveal that a setup with four experts delivered the highest scores in Fog and Snow, achieving an overall mean mIoU of 61.8\%. These results demonstrate that the relationship between the number of experts and model performance in CTTA is not linear; rather, precise activation decomposition is essential for optimal performance.

\textbf{Effectiveness of each proposed module.} 
We conducted an ablation study in the Cityscape-to-ACDC CTTA scenario to evaluate the effects of our modules including Spatial Differentiate Dropout (SDD), Domain Agnostic Gate (DAG), Activation Sparsity Gate (ASG), and Homeostatic-Proximal (HP) loss. \cref{tab:ablation} shows $Ex_{0}$ as the baseline using the CoTTA \cite{wang2022continual} and $Ex_{1}$ adding a 4-expert MoE architecture to $Ex_{0}$. However, simply adding MoE did not enhance performance; it instead decreased by 0.7\%. In contrast, implementing our SDD in $Ex_{2}$ improved segmentation results by 3.2\%. Further introductions of our modules from $Ex_{3}$ to $Ex_{5}$ increased mIoU from 61.5\% to 62.2\%, confirming the effectiveness of our proposed methods.

\begin{figure*}[t]
\centering
\includegraphics[width=0.99\linewidth]{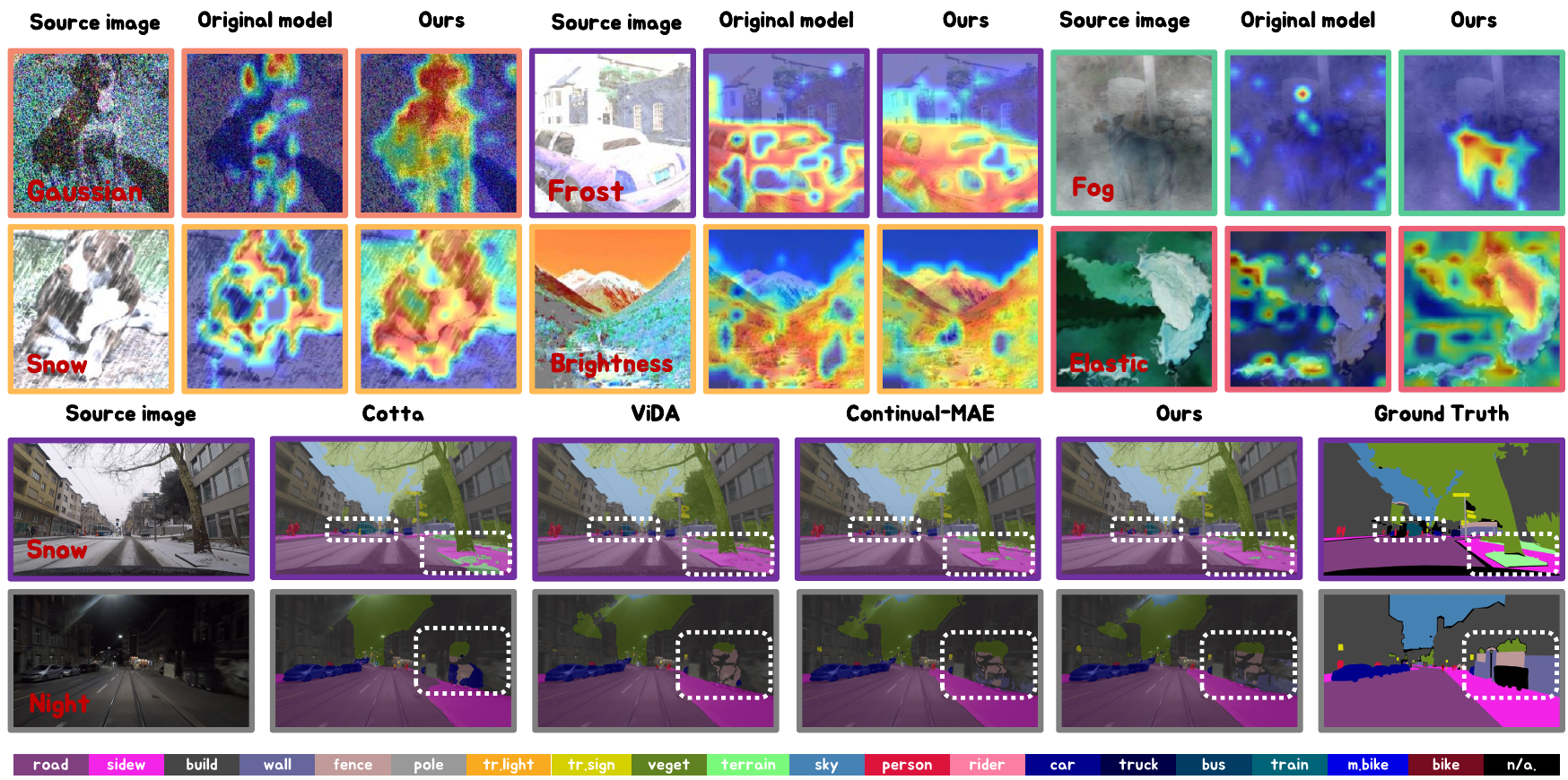}
\caption{\textbf{The qualitative analysis} of the CAM and the segmentation qualitative comparison of our method with previous SOTA methods on the ACDC dataset. }
\label{fig:vis}
\end{figure*}

\begin{table}[t]
\begin{center}
      \begin{minipage}{0.48\textwidth}
        \centering
        \setlength\tabcolsep{7pt}
        \caption{\label{tab:experts} \textbf{Different number of experts.} mIoU score for Cityscape to ACDC with MoASE within the first round.}
        \footnotesize
                \resizebox{0.99\columnwidth}{!}{%
        \begin{tabular}{c|cccc|c}
        \toprule
        num. E. & Fog & Night & Rain & Snow 
        & Mean$\uparrow$ \\
        \midrule
        $E=0$&70.9&41.2&62.4&59.7&58.6\\
        \midrule
        $E=2$&71.6&44.0&\textbf{66.5}&63.7&61.5\\
       \rowcolor{green!10} \textbf{$E=4$} & \textbf{72.4}& \textbf{44.5}& 66.4& \textbf{63.8}& \textbf{61.8}\\
       $E=8$ & 71.4 & 44.0 & 65.0 & 61.7 & 60.5 \\ 
        $E=16$ & 71.5 & 44.1 & 65.9 & 63.3& 61.2 \\ 
        \bottomrule
        \end{tabular}
        }
    \end{minipage}
    \hspace{0.02\textwidth}
    \begin{minipage}{0.48\textwidth}
     \centering
        \setlength\tabcolsep{5pt}
        \caption{\label{tab:ablation} \textbf{Ablation study} on Cityscape to ACDC showing the effectiveness of each module.}
        \footnotesize
        \resizebox{0.99\columnwidth}{!}{%
        \begin{tabular}{c|ccccc|c}
        \toprule
         & MoE & SDD & DAG & ASG & HP & Mean$\uparrow
         $ \\\midrule
        $Ex_{0}$ & - & - & - & -& - &58.6 \\ 
        $Ex_{1}$ & \checkmark & - & - & -& - &57.9 \\ 
        $Ex_{2}$ & \checkmark & \checkmark & - & -& - &61.1\\
        $Ex_{3}$ & \checkmark & \checkmark & \checkmark & -& -&61.5\\
        $Ex_{4}$ & \checkmark & \checkmark & \checkmark &\checkmark& - & 62.0\\
        \cellcolor{green!10}$Ex_{5}$  & 
        \cellcolor{green!10}\checkmark &
        \cellcolor{green!10}\checkmark & \cellcolor{green!10}\checkmark & \cellcolor{green!10}\checkmark & \cellcolor{green!10}\checkmark & \cellcolor{green!10}\textbf{62.2}\\
        \bottomrule
        \end{tabular}
        }
    \end{minipage}
\end{center}
\vspace{-0.5cm}
\end{table}

\subsection{Qualitative analysis}
\textbf{CAM visualization.} We conducted a qualitative analysis of the CAM on ImageNet-C, as illustrated at the top of \cref{fig:vis}. MoASE effectively concentrates on regions relevant to the target categories, such as dogs and cars, during classification decisions. In contrast, the original model's attention is dispersed due to continuous domain shifts. These findings underscore MoASE's superiority.

\textbf{Segmentation results.} For further validation, we provide additional qualitative comparisons in the Cityscapes-to-ACDC CTTA scenario, shown at the bottom of \cref{fig:vis}. Our method outperforms CoTTA\cite{wang2022continual}, ViDA\cite{liu2023vida}, and Continual-MAE\cite{liu2023adaptive} in producing segmentation maps for snow and night domains, precisely differentiating sidewalks from roads and identifying small objects like people, vegetation, and fences (highlighted in the white box). These results highlight our method's precise segmentation capabilities and robustness against dynamic domain shifts. Additionally, our segmentation maps align closely with the Ground Truth, leading to significant visual improvements.

\section{Conclusion and limitations}
\label{conclusion}
Inspired by the human visual system where specialized cone cells perceive different light spectrum sections, this paper explores deep neural networks (DNNs) under the Continual Test-time Adaptation (CTTA) scenario. We introduce a novel architecture, the Mixture-of-Activation-Sparsity-Experts (MoASE). This design decouples neural activation into high-activation and low-activation components using Spatial Differentiate Dropout (SDD) and enhances domain-specific feature extraction while improving the perception of domain-agnostic objects through expert modules. Enhanced by a multi-gate module and Homeostatic-Proximal (HP) loss, MoASE surpasses state-of-the-art baselines in four benchmarks. However, the additional computational overhead introduced to the system can potentially be further minimized in future research. Exploring methods such as feature modulation or quantization could enhance model efficiency.


\bibliography{neurips_2024}
\bibliographystyle{unsrt}

\clearpage

\appendix
\section*{Appendix}
The supplementary materials accompanying this paper\footnote{Part of the figure downloaded from \url{https://www.wikiwand.com},
\url{https://reactome.org},
\url{https://www.janssenwithme.com}
} provide a comprehensive quantitative and qualitative analysis of the proposed method. To begin with, we provide the error bound analysis in \cref{ap:bound}. In Section \cref{ap:details}, we offer detailed insights into the experimental settings, including datasets and additional implementation specifics. Furthermore, we expand our investigation into the effects of varying middle-layer dimensions on CIFAR10-C performance in Section \cref{ap:ml}. 
To evaluate the domain generalization capabilities of our method, we conducted experiments that directly tested its performance across a varying number of unseen domains, as detailed in Section \cref{ap:dg}. Section \cref{ap:five} describes five rounds of semantic segmentation CTTA experiments, providing deeper insights into the method's adaptability and effectiveness.
Additionally, we include an experiment in \cref{ap:dimension} that elucidates our decision to apply Spatial Differentiate Dropout (SDD) on the spatial-wise token dimension rather than the channel dimension, highlighting the strategic rationale behind this choice. Finally, Section \cref{ap:qu} presents further qualitative analysis, offering a visual and descriptive exploration of the method's performance enhancements and capabilities.

\section{A bound relating the source and target error}
\label{ap:bound}
We now turn our attention to establishing bounds on the target domain generalization performance of a discriminator trained within the source domain. Initially, we aim to quantify the target error about the source error\cite{ben2006analysis, ben2010theory}, the discrepancy between the labeling functions $f_S$ and $f_T$, and the divergence between the distributions $D_S$ and $D_T$. Given the practical assumption that the difference between labeling functions is minimal, our analysis primarily concentrates on quantifying distribution divergence. Particularly, we explore methods for estimating this divergence using finite samples of unlabeled data from $D_S$ and $D_T$. A pertinent measure for evaluating divergence between distributions is the variation divergence:
\begin{equation}
d(D, D') = 2 \mathop{\mathrm{sup}}_{W \in \mathcal{W}} | \mathrm{Pr}_{D}[W] - \mathrm{Pr}_{D'}[W] |
\label{apeq:vd}
\end{equation}
where $\mathcal{W}$ represents the set of measurable subsets with respect to distributions $D$ and $D'$. We employ this measure to establish an initial bound on the target error of a discriminator.

\begin{theorem}
For a hypothesis h,
\begin{equation}
\epsilon_{T}(h) \leq \epsilon_{S}(h) + d(D, D') + min\{E_{D}[|f_{S}(\textbf{x})-f_{T}(\textbf{x})|], E_{D'}[|f_{S}(\textbf{x})-f_{T}(\textbf{x})|] \}
\end{equation}
\end{theorem}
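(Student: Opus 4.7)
The plan is to establish the bound via two applications of the triangle inequality, one at the level of the labelling functions and one at the level of the distributions, and then symmetrize by taking the minimum of the two asymmetric bounds that these orderings produce.

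First, I would expand $\epsilon_T(h) = \mathbb{E}_{\mathbf{x} \sim D_T}[|h(\mathbf{x}) - f_T(\mathbf{x})|]$ and apply the pointwise triangle inequality $|h - f_T| \leq |h - f_S| + |f_S - f_T|$; taking expectations under $D_T$ yields
$$\epsilon_T(h) \;\leq\; \mathbb{E}_{D_T}\!\left[|h - f_S|\right] + \mathbb{E}_{D_T}\!\left[|f_S - f_T|\right].$$
Next, I would replace the first expectation by its source counterpart, paying the price of a distribution-transport term. Since $|h - f_S|$ takes values in $[0,1]$, the layer-cake identity $g = \int_0^1 \mathbb{1}_{\{g \geq t\}} dt$ together with the definition of $d$ gives the auxiliary fact $|\mathbb{E}_{D_S}[g] - \mathbb{E}_{D_T}[g]| \leq d(D_S, D_T)$ for every measurable $g: X \to [0,1]$. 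Applied to $g = |h - f_S|$, this yields
$$\mathbb{E}_{D_T}\!\left[|h - f_S|\right] \;\leq\; \mathbb{E}_{D_S}\!\left[|h - f_S|\right] + d(D_S, D_T) \;=\; \epsilon_S(h) + d(D_S, D_T),$$
so chaining these two displays delivers the bound with $\mathbb{E}_{D_T}[|f_S - f_T|]$ as the final term.

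To obtain the minimum, I would redo the argument in the opposite order: first transport $\epsilon_T(h) = \mathbb{E}_{D_T}[|h - f_T|]$ to $D_S$ using the auxiliary fact on $g = |h - f_T| \in [0,1]$, and only then insert $f_S$ via the triangle inequality under $D_S$. This produces the mirror bound $\epsilon_T(h) \leq \epsilon_S(h) + d(D_S, D_T) + \mathbb{E}_{D_S}[|f_S - f_T|]$. Since both inequalities are valid, the tighter of the two is as well, and the minimum on the right-hand side of the claim follows immediately.

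The main obstacle is purely bookkeeping rather than conceptual: I need to be careful that the auxiliary bound $|\mathbb{E}_{D_S}[g] - \mathbb{E}_{D_T}[g]| \leq d(D_S, D_T)$ is stated with the correct constant under the paper's convention $d(D,D') = 2\sup_{W} |\Pr_D[W] - \Pr_{D'}[W]|$, and that at every step the functions whose absolute differences I push through expectations really do take values in $[0,1]$ (which is implicit once $h, f_S, f_T$ are assumed $[0,1]$-valued). Everything else is two lines of triangle inequality.
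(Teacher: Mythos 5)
Your proposal is correct and follows essentially the same route as the paper: both proofs decompose $\epsilon_T(h)$ into a source-error term, a labelling-function discrepancy controlled by a triangle inequality, and a distribution-shift term controlled by $d(D,D')$ via the boundedness of the integrand, and both obtain the minimum by running the two orderings of these steps and keeping the tighter bound. The only cosmetic difference is that you phrase the distribution-shift step as a transport lemma proved by the layer-cake identity, whereas the paper writes it as an add-and-subtract telescoping followed by $\int|\phi_S-\phi_T||h-f_T|\,d\mathbf{x}\leq d(D,D')$; these are the same estimate.
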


\begin{proof}
Given that $\epsilon_{T}(h) = \epsilon_{T}(h, f_{T})$ and $\epsilon_{S}(h) = \epsilon_{S}(h, f_{S})$, let $\phi_{S}$ and $\phi_{T}$ denote the density functions of distributions $D$ and $D'$, respectively.
\begin{align}
\epsilon_{T}(h) &= \epsilon_{T}(h) + \epsilon_{S}(h) - \epsilon_{S}(h) + \epsilon_{S}(h,f_{T}) - \epsilon_{S}(h,f_{T}) \\ 
&\leq \epsilon_{S}(h) + |\epsilon_{S}(h,f_{T}) - \epsilon_{S}(h,f_{S})| + |\epsilon_{T}(h,f_{T}) - \epsilon_{S}(h,f_{T})| \\ 
&\leq \epsilon_{S}(h) + E_{D}[|f_{S}(\textbf{x})-f_{T}(\textbf{x})|] + |\epsilon_{T}(h,f_{T}) - \epsilon_{S}(h,f_{T})| \\ 
&\leq \epsilon_{S}(h) + E_{D}[|f_{S}(\textbf{x})-f_{T}(\textbf{x})|] + \int|\phi_{S}(\textbf{x}) - \phi_{T}(\textbf{x})||h(\textbf{x})-f_{T}(\textbf{x})|d\textbf{x} \\
&\leq \epsilon_{S}(h) + E_{D}[|f_{S}(\textbf{x})-f_{T}(\textbf{x})|] + d(D, D')
\end{align}
\end{proof}

In the first line, we can alternatively add and subtract $\epsilon_{T}(h,f_{S})$ instead of $\epsilon_{T}(h,f_{T})$. This adjustment leads to the same theoretical bound, albeit with the expectation calculated concerning distribution $D'$ rather than $D$. Choosing the smaller of the two bounds provides a more favorable constraint.

However, approximating the error of the optimal hyperplane discriminator for arbitrary distributions is recognized as an NP-hard problem. In response, we approximate the optimal hyperplane discriminator by minimizing a convex upper bound on the error, a common approach in classification. It is crucial to recognize that this method does not yield a valid upper bound on the target error. Therefore, we utilize the $Jensen$-$Shannon\ (JS)\ divergence$ between two adjacent domains as an approximation of $\mathcal{H}$-divergence with respect to $d(D, D')$.

\section{Detailed experiment settings}
\label{ap:details}

Following \cite{wang2022continual}, in classification CTTA tasks, the model's online prediction capabilities are evaluated immediately upon processing the input data. To mimic real-life scenarios of environmental change, we cyclically expose the model to these conditions in a repeated sequence (Fog→Night→Rain→Snow) across multiple cycles.

\textbf{Dataset.} We evaluate our method on three classification CTTA benchmarks: CIFAR10-to-CIFAR10C, CIFAR100-to-CIFAR100C \cite{krizhevsky2009learning}, and ImageNet-to-ImageNet-C \cite{hendrycks2019benchmarking}. CIFAR10-C, CIFAR100-C, and ImageNet-C are specifically designed to test the robustness of machine learning models against common real-world image corruptions and perturbations, including noise, blur, and compression. For the segmentation CTTA context \cite{yang2023exploring,liu2023vida}, we conduct evaluations using the Cityscapes-to-ACDC benchmark. Cityscapes dataset \cite{cordts2016cityscapes} is used as the source domain, while the ACDC dataset \cite{sakaridis2021acdc} serves as the target domain, assessing adaptation effectiveness.

\textbf{Implementation Details.} In our CTTA experiments, we meticulously adhere to the implementation protocols established in previous research \cite{wang2022continual} to ensure both consistency and comparability across our studies. As part of our augmentation strategy, we employ a range of image resolution scaling factors [0.5, 0.75, 1.0, 1.25, 1.5, 1.75, 2.0] to generate inputs for the teacher model, as suggested by Wang et al. \cite{wang2022continual}. The batch size is set to 40 according to ViDA\cite{liu2023vida}. For the segmentation task, the learning rate decay is set at 0.5 for each 3200 iterations. The hyperparameter $\mu$ in \cref{apeq:hploss} is set to 1, aligning with the findings from previous studies \cite{li2020federated}. The threshold is manually set for SDD as $q\%=\{\frac{id_{h}}{E},...,\frac{1}{2},\frac{id_{l}}{E},...,\frac{1}{2}\}$, where $id_{h}$ and $id_{l}$ suggests the expert ID for domain-agnostic experts and domain-specific experts. In the initialization process of the MoASE, we adopt the methodology from Adaptformer \cite{chen2022adaptformer}. The weights of the down-projection layers are initialized using Kaiming Normal initialization \cite{he2015delving}, which is well-suited for preserving the mean and variance of inputs through the layers during forward and backward passes. Conversely, the biases in the additional networks and the weights of the up-projection layers are set to zero. This combination of initialization is designed to optimize the performance and stability of the model during training, ensuring that each component contributes effectively to the overall learning process. The random seed is set to 1.
\vspace{-0.1cm}

\begin{table*}[htb]
\caption{\textbf{Classification error rate(\%) for CIFAR10-to-CIFAR10-C online CTTA task.} Gain(\%) represents the percentage of improvement in model accuracy compared with the source method with different hidden size $h$ in Activation Sparsity Experts.}
\centering
\setlength\tabcolsep{4pt}
\begin{adjustbox}{width=1\linewidth,center=\linewidth}
\begin{tabular}{c|c|ccccccccccccccc|c}
\toprule
 &Method  &
 \rotatebox[origin=c]{70}{Gaussian} &  \rotatebox[origin=c]{70}{shot} &  \rotatebox[origin=c]{70}{impulse} &  \rotatebox[origin=c]{70}{defocus} &  \rotatebox[origin=c]{70}{glass} &  \rotatebox[origin=c]{70}{motion} &  \rotatebox[origin=c]{70}{zoom} &  \rotatebox[origin=c]{70}{snow} &  \rotatebox[origin=c]{70}{frost} &  \rotatebox[origin=c]{70}{fog}  &  \rotatebox[origin=c]{70}{brightness} &  \rotatebox[origin=c]{70}{contrast} &  \rotatebox[origin=c]{70}{elastic} &  \rotatebox[origin=c]{70}{pixelate} &  \rotatebox[origin=c]{70}{jpeg}
& Mean$\downarrow$ \\\midrule
$Ex_1$&Source&60.1&53.2&38.3&19.9&35.5&22.6&18.6&12.1&12.7&22.8&5.3&49.7&23.6&24.7&23.1&28.2\\
$Ex_2$&CoTTA &58.7&51.3&33.0&20.1&34.8&20&15.2&11.1&11.3&18.5&4.0&34.7&18.8&19.0&17.9&24.6\\
\midrule
$Ex_3$& $h=24$ &65.4&56.3&32.8&\textit{16.2}&31.4&\textit{13.5}&\textit{9.4}&8.8&8.3&10.6&\textit{3.0}&16.1&\textit{12.1}&\textit{15.5}&\textit{13.3}&20.8\\
\cellcolor{green!10}$Ex_4$& \cellcolor{green!10}$h=48$ &\cellcolor{green!10}\textbf{43.7}& \cellcolor{green!10}\textbf{31.3}&\cellcolor{green!10}\textbf{25.1}&\cellcolor{green!10}16.5& \cellcolor{green!10}\textbf{28.1}&\cellcolor{green!10}13.8&\cellcolor{green!10}9.7&\cellcolor{green!10}\textbf{8.3}& \cellcolor{green!10}\textbf{7.1}& \cellcolor{green!10}\textit{10.1}&\cellcolor{green!10}\textbf{3.0}&\cellcolor{green!10}\textbf{12.9}& \cellcolor{green!10}\textbf{12.0}&\cellcolor{green!10}\textit{16.3}& \cellcolor{green!10}13.5&\cellcolor{green!10}\textbf{16.8}\\
$Ex_5$& $h=96$ &63.4&52.4&36.4&18.7&31.9&15.4&11.2& 9.7&8.5&12.0&3.3&15.5&12.8&17.1&14.2&21.5\\
$Ex_6$& $h=192$ &\textit{59.1}&\textit{47.7}&\textit{27.4}&\textbf{
16.0}&\textit{28.6}&\textbf{13.0}&\textbf{9.2}&\textit{8.4}&\textit{7.7}&\textbf{9.8}&\textit{3.0}&\textit{14.5}&12.3&\textit{15.5}&\textit{13.2}&\textit{19.0}\\
\bottomrule
\end{tabular}
\end{adjustbox}
\vspace{-0.2cm}
\label{tab:hidden}
\end{table*}

\vspace{-0.2cm}
\section{How does the middle-layer dimension influence the performance?}
\label{ap:ml}

According to the results presented in \cref{tab:hidden}, the table provides a comparative analysis of the baseline method's performance against three experimental configurations with hidden sizes of 24, 48, 96, and 192 on the CIFAR10-to-CIFAR10-C online CTTA task. An interesting observation from the table is that the correlation between model performance and the size of the hidden dimension does not follow a linear trend. Specifically, a hidden size of $h=48$ demonstrates optimal model performance with a mean error rate of 16.8\%, whereas a smaller hidden size of $h=24$ results in a higher error rate of 20.8\% where the larger hidden size of $h=96$ with an even higher error rate of 21.5\%. Moreover, further increasing the hidden dimension to $h=192$ does not yield a significant improvement in performance, with the error rate marginally reduced to 19.0\%, but it entails a substantial increase in the model size.

The nonlinear relationship between model architecture and task complexity is critical in resource-limited settings like mobile computing or autonomous driving, where efficient yet effective models are essential. Therefore, we selected a hidden size of $h=48$ for our experiments to best balance performance and computational efficiency.

\section{Domain generalization on a different number of unseen target domains}
\label{ap:dg}

\begin{table*}[t]
\centering
 \caption{\label{tab:dg10} \textbf{The domain generalization experiments on ImageNet-C}, where the source model was continually adapted on the first 5 domains and directly tested on 10 unseen domains. The evaluation of the results was conducted using ViT-base.}
\small
\setlength\tabcolsep{8pt}
\resizebox{0.99\columnwidth}{!}{%
\begin{tabular}{l|cccccccccc|c}
\toprule
 &  \multicolumn{10}{c|}{\textbf{Directly test on 10 unseen domains}}& \multicolumn{1}{c}{\textbf{Unseen}} \\ \midrule
Method & \rotatebox[origin=c]{70}{motion} & \rotatebox[origin=c]{70}{zoom} & \rotatebox[origin=c]{70}{snow} & \rotatebox[origin=c]{70}{frost} & \rotatebox[origin=c]{70}{fog} & \rotatebox[origin=c]{70}{brightness} & \rotatebox[origin=c]{70}{contrast} & \rotatebox[origin=c]{70}{elastic\_trans} & \rotatebox[origin=c]{70}{pixelate} & \rotatebox[origin=c]{70}{jpeg}
& Mean$\downarrow$ \\\midrule
Source\cite{xie2021segformer}& 58.5&63.3&49.9&54.2&57.7&26.4&91.4&57.5&38.0&36.2&53.3\\
TENT\cite{DequanWangetal2021}& 56.0 & 61.3& 45.7 &49.6   &  56.6 &24.8&94.0&55.6&37.1&35.1&51.6\\
CoTTA\cite{wang2022continual}& 57.3 & 62.1& 49.1 &52.0   &  57.1 &26.4&91.9&57.1&37.6&35.3&52.6\\
ViDA\cite{liu2023vida} &46.4 &\textbf{52.7}&39.8  & 43.7  &\textbf{42.2}   &\textbf{23.5}&71.5&49.6&33.9&33.3&43.7\\ 
\cellcolor{green!10}\textbf{Ours} & \cellcolor{green!10}\textbf{43.9} &\cellcolor{green!10}\textit{53.7} &\cellcolor{green!10}\textbf{38.0}  & \cellcolor{green!10}\textbf{37.6}  &\cellcolor{green!10}\textit{46.0}   &\cellcolor{green!10}\textit{24.0}&\cellcolor{green!10}\textbf{66.0}&\cellcolor{green!10}\textbf{42.8}&\cellcolor{green!10}\textbf{28.9}&\cellcolor{green!10}\textbf{29.6}&\cellcolor{green!10}\textbf{41.0}\\ 
\bottomrule
\end{tabular}
}
\end{table*}

In this study, we adhere to the leave-one-domain-out principle, similar to previous works \cite{zhou2021domain, li2017deeper}. We select a subset of ImageNet-C domains as new source domains for model updating while designating the remaining domains as target domains where no adaptation occurs. This approach diverges from earlier domain generalization experiments; here, we implement an unsupervised continual test-time adaptation (CTTA) approach, updating the model solely on these unlabeled source domains. The initial model weights are derived exclusively from ImageNet pre-trained parameters.

We specify that 5 out of 15 domains from ImageNet-C are used as source domains, with the other 10 and 8 out of 15 domains serving as unseen target domains. The results, as detailed in \cref{tab:dg10}, are noteworthy. Our method achieves a significant reduction of 12.3\% in average error across these unseen domains. These encouraging results validate the domain generalization (DG) capabilities of our approach, demonstrating its effectiveness in extracting domain-shared knowledge. This success offers a novel perspective on enhancing DG performance within an unsupervised framework.

\begin{table*}[htb]
\caption{\label{aptab:ACDC} \textbf{5 rounds segmentation CTTA on Cityscape-to-ACDC.} We sequentially repeat the same sequence of target domains 5 times with different number of experts in our proposed MoASE. Mean is the average score of mIoU.}
\centering
\setlength\tabcolsep{0.1pt}
\begin{adjustbox}{width=1\linewidth,center=\linewidth}
\begin{tabular}{c|ccccc|ccccc|ccccc|ccccc|ccccc|c }
\midrule

Round       & \multicolumn{5}{c|}{1}    & \multicolumn{5}{c|}{2}     & \multicolumn{5}{c|}{3} & \multicolumn{5}{c|}{4} & \multicolumn{5}{c|}{5}  & \multirow{2}{*}{Mean}   \\ 
\cmidrule{1-26}
Method  & Fog & Night & Rain & Snow & Mean & Fog & Night & Rain & Snow  & Mean & Fog & Night & Rain & Snow & Mean &   Fog & Night & Rain & Snow & Mean &  Fog & Night & Rain & Snow & Mean \\ \midrule
Source   &69.1&40.3&59.7&57.8&56.7&69.1&40.3&59.7& 	57.8&56.7&69.1&40.3&59.7& 57.8&56.7&56.7 &40.3&59.7& 57.8&56.7&56.7 &40.3&59.7& 57.8&56.7&56.7 \\ 
CoTTA  &70.9&41.2&62.4&59.7&58.6&70.9&41.1&62.6& 	59.7&58.6&70.9&41.0&62.7&59.7&58.6&70.9&41.0&62.7&59.7&58.6&70.9&41.0&62.8&59.7&58.6&58.6\\ 
CoTTA$^{*}$  &71.4&45.1& 66.8&63.0&61.8&71.9&43.3&65.9& 	61.8&60.7&70.1&39.5&63.0&60.6&58.3&68.1&39.8&62.0&60.0&57.4&68.0&38.3&62.0&59.8&57.0&59.0\\
ViDA  & 71.6& 43.2& 66.0& 63.4& 61.1& 
  73.2& 44.5& 67.0& 63.9& 62.2 
 & 73.2& 44.6& 67.2& 64.2& 62.3  &70.9&44.0&66.0&63.2&61.0&72.0&43.7&66.3&63.1&61.3&61.6\\ 
 \midrule
 E=2  & 71.6& 44.0& 66.5& 63.8& 61.5 & 
72.7&45.6 & 67.5& 63.9& 62.3
 & 72.5& 44.6& 67.5& 63.0& 62.0    & 71.8 & 44.4 & 67.3 & 63.3 & 61.7 & 71.3 & 43.2 & 66.8 & 62.3 & 60.9&61.7 \\
\cellcolor{green!10}E=4 & 
\cellcolor{green!10}\textbf{72.4}& \cellcolor{green!10}\textbf{44.5}& \cellcolor{green!10}\textbf{66.4}& \cellcolor{green!10}\textbf{63.8}& \cellcolor{green!10}\textbf{61.8}& 
\cellcolor{green!10}\textbf{73.0}& \cellcolor{green!10}\textbf{45.1}& \cellcolor{green!10}\textbf{67.5}& \cellcolor{green!10}\textbf{63.5}& \cellcolor{green!10}\textbf{62.3}& \cellcolor{green!10}\textbf{73.5}& \cellcolor{green!10}\textbf{44.5}& \cellcolor{green!10}\textbf{67.4}& \cellcolor{green!10}\textbf{63.5}& \cellcolor{green!10}\textbf{62.3}&
\cellcolor{green!10}\textbf{71.8}& \cellcolor{green!10}\textbf{44.3}& \cellcolor{green!10}\textbf{66.3}& \cellcolor{green!10}\textbf{62.7}& \cellcolor{green!10}\textbf{61.3}&
\cellcolor{green!10}\textbf{72.0}& \cellcolor{green!10}\textbf{44.1}& \cellcolor{green!10}\textbf{66.5}& \cellcolor{green!10}\textbf{62.2}&  \cellcolor{green!10}\textbf{61.2}&
\cellcolor{green!10}\textbf{61.8}\\
 E=8 & 71.4 & 44.0 & 65.0 & 61.7 & 60.5 & 72.3 & 43.5 & 65.5 & 62.0 & 60.8 & 71.6 & 43.1 & 64.6 & 61.4 & 60.2& 71.2 & 42.5 & 64.7 & 62.4 & 60.2& 70.6 & 43.3 & 65.2 & 61.4 & 60.1&60.4  \\
 E=16 & 71.5 & 44.1 & 65.9 & 63.3 & 61.2  & 72.7 & 44.6 & 67.3 & 63.3 &62.0 & 72.2 &  43.5 & 67.8 & 63.0 &61.6 & 71.3 & 43.2 & 67.3 & 62.3 & 61.0& 70.8 & 43.0 & 66.7 & 61.8 & 60.6 &61.3 \\
\bottomrule
\end{tabular}
\end{adjustbox}
\end{table*}

\section{Additional experiments with different number of experts}
\label{ap:five}

We present the segmentation CTTA experiment across 5 rounds in \cref{aptab:ACDC}, showing consistent mean mIoU enhancement during the initial rounds (1-3) and stable performance thereafter. Our method achieved a 0.2\% mIoU improvement over the previous SOTA after averaging results from all rounds. Additionally, \cref{aptab:ACDC} (CoTTA$^{*}$) details our adjustment of CoTTA's hyperparameters, specifically increasing the learning rate to 2e-4. This change initially improves performance but leads to a decline in segmentation accuracy and catastrophic forgetting in later rounds.

Furthermore, we present the comprehensive results in \cref{tab:experts} with different numbers of experts, which are detailed in the last four lines of \cref{aptab:ACDC}. These results further underscore the robustness and flexibility of our proposed Mixture of Activation Sparsity Experts (MoASE), demonstrating its capability to achieve satisfactory outcomes across various experimental settings. Moreover, we observed a similar trend between the number of experts and the dimension of the hidden layer, suggesting that the appropriate activation decomposition is more crucial than the model size.

\begin{table*}[t]
\caption{\label{tab:imagenet-dimension} \textbf{Classification error rate(\%) for ImageNet-to-ImageNet-C online CTTA task.} Gain(\%) represents the percentage of improvement in model accuracy compared with the source method.}
\centering
\setlength\tabcolsep{1pt}
\begin{adjustbox}{width=1\linewidth,center=\linewidth}
\begin{tabular}{c|c|ccccccccccccccc|cc}
\toprule
Method & Venue &
 \rotatebox[origin=c]{70}{Gaussian} & \rotatebox[origin=c]{70}{shot} & \rotatebox[origin=c]{70}{impulse} & \rotatebox[origin=c]{70}{defocus} & \rotatebox[origin=c]{70}{glass} & \rotatebox[origin=c]{70}{motion} & \rotatebox[origin=c]{70}{zoom} & \rotatebox[origin=c]{70}{snow} & \rotatebox[origin=c]{70}{frost} & \rotatebox[origin=c]{70}{fog}  & \rotatebox[origin=c]{70}{brightness} & \rotatebox[origin=c]{70}{contrast} & \rotatebox[origin=c]{70}{elastic\_trans} & \rotatebox[origin=c]{70}{pixelate} & \rotatebox[origin=c]{70}{jpeg}
& Mean$\downarrow$ & Gain$\uparrow$ \\
\midrule
\multicolumn{19}{c}{CIFAR10 \quad $\Rightarrow$ \quad CIFAR10-C}\\
\midrule
Source\cite{dosovitskiy2020image} & ICLR2021 &60.1&53.2&38.3&19.9&35.5&22.6&18.6&12.1&12.7&22.8&5.3&49.7&23.6&24.7&23.1&28.2&0.0\\
CoTTA\cite{wang2022continual} & CVPR2022 &\textit{58.7}&51.3&33.0&20.1&34.8&20&15.2&11.1&11.3&18.5&4.0&34.7&18.8&19.0&17.9&24.6&+3.6\\
Ours-Channel & Proposed & 59.2 & \textit{48.0} & \textit{28.1} & \textbf{15.8} & \textit{29.5} & \textbf{13.5} & \textbf{9.3} & \textit{8.4} & \textit{7.7} & \textbf{10.2} & \textbf{3.3} & \textbf{13.6}  & \textit{12.1} & \textit{15.6} & \textbf{13.3} & \textit{19.2} & +\textit{9.0}\\
\rowcolor{green!10}\textbf{Ours-Token} & \textbf{Proposed} & \textbf{52.4}& \textbf{40.1}&\textbf{25.6}&\textit{16.2}& \textbf{28.5}&\textit{13.7}&\textit{9.5}&\textbf{8.1}& \textbf{7.3}& \textbf{10.2}&\textbf{3.1}&\textit{14.0}& \textbf{12.0}& \textbf{15.6}& \textit{13.6}& \textbf{18.0}& +\textbf{10.2}\\
\bottomrule
\end{tabular}
\end{adjustbox}
\end{table*}

\section{Why conduct the SDD on spatial-wise token dimension instead of channel dimension?}
\label{ap:dimension}
In our experiments, we implemented Spatial Differentiate Dropout (SDD) on both the spatial-wise token dimension $n$ and the channel dimension $d$ of the input feature $\mathcal{F}\in \mathbb{R}^{b\times n \times d}$, as shown in \cref{tab:imagenet-dimension}. We observed that in most target domains, channel-wise MoASE maintained satisfactory performance levels that were comparable to those of spatial-wise MoASE, with an average error rate of 19.2\% and a +9.0\% improvement over the baseline from the source domain. However, in the first two target domains, implementing SDD specifically on the spatial-wise token dimension significantly enhanced model performance, reducing the error rate to an average of over 10\%. We attribute this improvement to the distribution of domain-related information—such as styles and noises—across the entire images on each token. Consequently, decomposing the activation at the token dimension appears more effective for differentiating between domain-specific and domain-agnostic features, thereby tailoring the model more adeptly to the nuances of each domain.

\begin{table*}[htb]
\vspace{-0.3cm}
\caption{\label{tab:comp} \textbf{Analysis of computational costs} for our proposed MoASE and previous SOTA baselines including a detailed comparison in terms of the number of parameters, FLOPs, and memory usage.}
\centering
\setlength\tabcolsep{10pt}
\begin{adjustbox}{width=1\linewidth,center=\linewidth}
\begin{tabular}{c|c|cc|cc|cc}
\toprule
\multicolumn{2}{c|}{Round}          & \multicolumn{2}{c|}{Parameters (M)}    & \multicolumn{2}{c|}{FLOPs (GMac)}     & \multicolumn{2}{c}{GPU Memory (M)}  \\ \cmidrule{1-8}
Method & Venue & Number & Overhead (\%) & Number & Overhead (\%) & Number & Overhead (\%) \\ \midrule
CoTTA\cite{wang2022continual}  & CVPR2022  &173.14&-&650.72&-&20842&-\\ 
ViDA\cite{liu2023vida}  & ICLR2024   &177.95&2.8&668.25&2.7&22330&6.7\\ 
 \rowcolor{green!10}\textbf{Ours} & \textbf{Proposed}  &180.44&4.2&677.31&4.0&24530&17.6\\
 \bottomrule
\end{tabular}
\end{adjustbox}
\vspace{-0.3cm}
\end{table*}

\section{Computation analysis.}
\label{ap:comp}
A primary concern in the Continual Test-Time Adaptation (CTTA) task is the requisite for computational efficiency, particularly in mobile computing applications such as autonomous driving vehicles, which are often constrained by limited computational resources. To address this issue, we present a detailed analysis of the computational costs of our proposed MoASE framework, including FLOPs, memory consumption, and model parameters with one Tesla A100, as shown in \cref{tab:comp}. It is observed that while MoASE incurs a slightly higher computational cost, the increase is modest with only 4.2\% in a number of parameters, and 4.0\% in FLOP—amounting to only a small percentage of the original model's resources. This is juxtaposed against a substantial accuracy improvement of 15.3\%, demonstrating the effectiveness of our approach in resource-constrained environments.

\section{More qualitative results.}
\label{ap:qu}

To further validate the effectiveness of our proposed method, we have expanded our evaluation by providing additional qualitative Class Activation Map (CAM) visualizations across nine distinct target domains. These visualizations are coupled with comparative analyses in the challenging Cityscapes-to-ACDC Cross-Domain Task-Aware Adaptation (CTTA) scenario. The segmentation outputs, meticulously derived through the CTTA process, are vividly illustrated at the top and bottom of \cref{fig:ap_vis}, respectively. These illustrations not only showcase the segmentation prowess of our method but also serve as a testament to the nuanced understanding of contextual discrepancies across different domains under th3cCTTA scenario.

The CAM visualizations, in particular, are pivotal in demonstrating the effectiveness of our MoASE method's activation decomposition strategy. By focusing on specific areas of interest within the images, these maps reveal how our method enhances domain-specific feature extraction, thereby facilitating more precise and contextually relevant insights into the underlying segmentation process. The top part of \cref{fig:ap_vis} displays these class attention maps, highlighting areas where our method has successfully concentrated its attention, correlating strongly with regions of significant semantic importance within each target domain.

At the bottom of \cref{fig:ap_vis}, our method's segmentation maps are displayed, offering a direct comparison with those generated by competing methodologies such as CoTTA, ViDA, and Continual-MAE across two additional target domains. Our method excels particularly in distinguishing subtle yet critical features such as the sidewalk from the road. This capability is emphasized by a white box in the visualizations, which draws attention to the precise delineation achieved by our segmentation approach. This distinction is crucial as it not only underscores our method's ability to produce more accurate segmentation outcomes but also highlights its robustness in effectively mitigating the impact of dynamic domain shifts.

Furthermore, in other categories beyond the highlighted examples, our method's segmentation outputs exhibit a high degree of resemblance to the ground truth. This close alignment leads to notable visual enhancements and provides a strong indication of the method's generalizability and effectiveness in handling diverse urban landscapes. The visual fidelity of these segmentation maps, coupled with the insightful CAM visualizations, collectively demonstrate the comprehensive capabilities of our proposed approach in adapting to and excelling within varied and dynamically changing environments.

\begin{figure*}[t]
\centering
\includegraphics[width=0.99\linewidth]{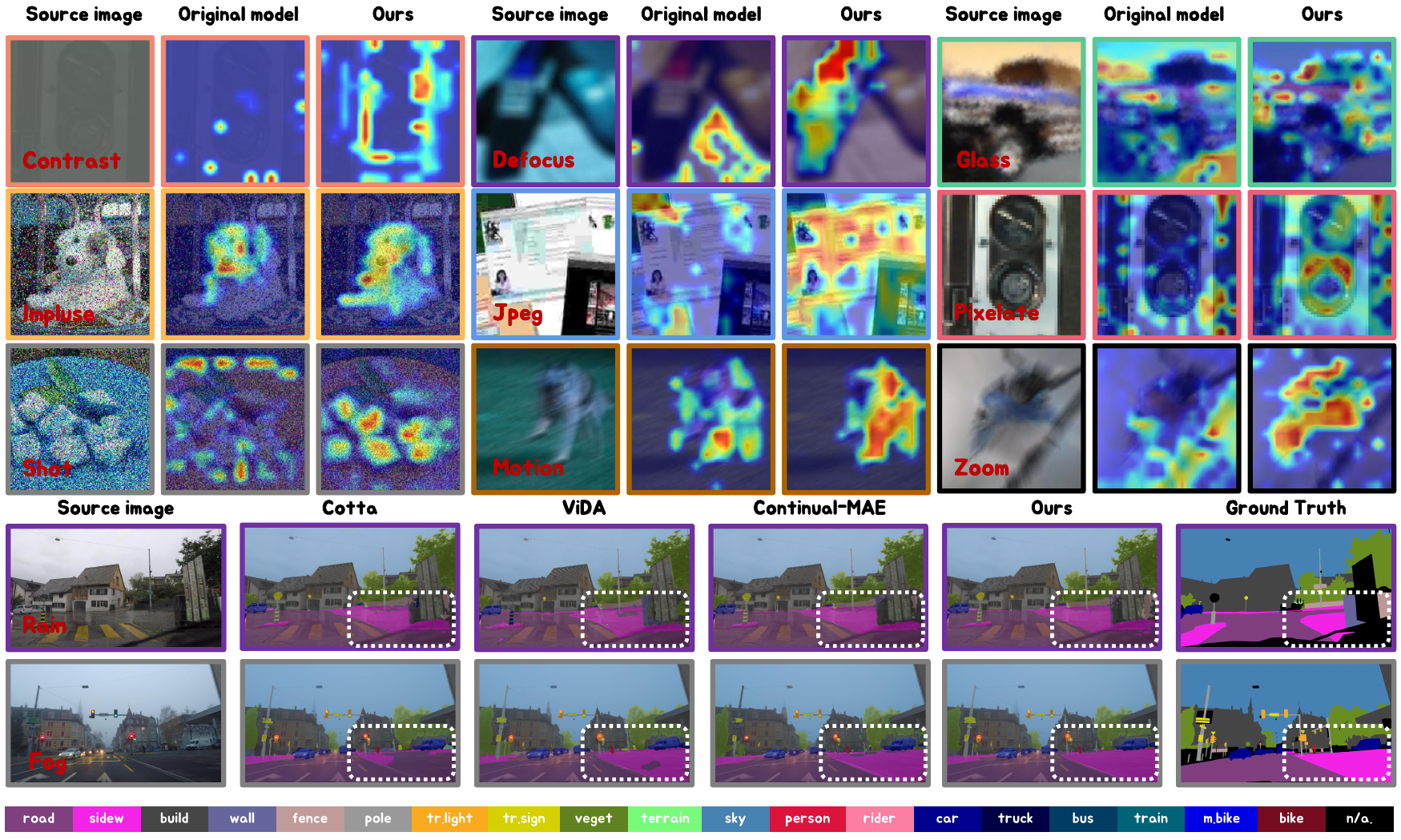}
\caption{The qualitative analysis of the CAM and the segmentation qualitative comparison of our method with previous SOTA methods on the ACDC dataset. }
\label{fig:ap_vis}
\end{figure*}

\end{document}